\documentclass[letterpaper, 10 pt, conference]{ieeeconf}  %

\IEEEoverridecommandlockouts   %
\usepackage{amsmath,amssymb,amsfonts}
\usepackage{mathtools}
\usepackage{bm}
\usepackage{graphicx}
\usepackage{booktabs}
\let\labelindent\relax
\usepackage{enumitem}
\usepackage{xcolor}
\usepackage[font=small,labelfont=bf,belowskip=0pt]{caption}
\usepackage{url}
\usepackage[hidelinks,bookmarks=false]{hyperref}

\usepackage[
  backend=biber,
  style=numeric-comp,   %
  sortcites=true,
  sorting=none,         %
  natbib=true,          %
  giveninits=true,
  maxcitenames=4,       %
  minbibnames=3,        %
  maxbibnames=4,        %
  doi=false,            %
  url=false,
  hyperref=true
]{biblatex}
\DeclareSourcemap{
  \maps[datatype=bibtex]{
    \map{
      \step[fieldset=file,      null]
      \step[fieldset=abstract,  null]
      \step[fieldset=keywords,  null]
      \step[fieldset=urldate,   null]
      \step[fieldset=month,     null]
      \step[fieldset=day,       null]
      \step[fieldset=language,  null]
      \step[fieldset=isbn,      null]
      \step[fieldset=issn,      null]
      \step[fieldsource=doi,
            match=\regexp{^\s*https?://(dx\.)?doi\.org/},
            replace={}]
      \step[fieldsource=booktitle,
            match=\regexp{^\s*Proceedings\s+of\s+(the\s+)?},
            replace={}]
      \step[fieldsource=eventtitle,
            match=\regexp{^\s*Proceedings\s+of\s+(the\s+)?},
            replace={}]
      \step[fieldsource=booktitle,
            match=\regexp{^\s*(19|20)\d{2}\s+},
            replace={}]
      \step[fieldsource=eventtitle,
            match=\regexp{^\s*(19|20)\d{2}\s+},
            replace={}]
      \step[fieldsource=booktitle,
            match=\regexp{^\s*\d+(st|nd|rd|th)\s+},
            replace={}]
      \step[fieldsource=eventtitle,
            match=\regexp{^\s*\d+(st|nd|rd|th)\s+},
            replace={}]
    }
  }
}
\renewbibmacro{in:}{}

\DeclareFieldFormat{title}{\mkbibemph{#1}}
\DeclareFieldFormat[article,incollection,report,techreport,inproceedings,book,misc]{title}{\mkbibquote{#1}}

\DeclareFieldFormat{eprint:arxiv}{arXiv\addcolon\space#1}
\usepackage{balance}
\usepackage{fontawesome5}  %

\usepackage{algorithm}
\usepackage[noend]{algpseudocode}
\algrenewcommand\algorithmicrequire{\textbf{Input:}}
\algrenewcommand\algorithmicensure{\textbf{Output:}}
\makeatletter
\algrenewcommand\ALG@beginalgorithmic{\small}
\algnewcommand\parState[1]{\State\parbox[t]{\dimexpr\linewidth-\ALG@thistlm\relax}{%
  \rightskip=0pt plus 2em\hyphenpenalty=10000\strut#1\strut}}
\makeatother
\algnewcommand\Stage[1]{\Statex\rule{0pt}{\dimexpr\ht\strutbox+3pt\relax}\textit{#1}}

\usepackage[acronym,nonumberlist,nogroupskip,nomain]{glossaries}
\glsdisablehyper

\graphicspath{{figures/}}

\usepackage{amsthm}

\theoremstyle{plain}
\newtheorem{theorem}{Theorem}
\newtheorem{proposition}{Proposition}

\theoremstyle{definition}
\newtheorem{definition}{Definition}
\newtheorem{assumption}{Assumption}

\theoremstyle{remark}

\newif\ifdraft
\draftfalse

\definecolor{draftblue}{HTML}{1F4E79}
\definecolor{drafttodo}{HTML}{C0392B}

\usepackage{comment}
\newenvironment{ptsinner}
  {\color{draftblue}\begin{itemize}[leftmargin=1.2em,itemsep=1pt,topsep=2pt,parsep=0pt]}
  {\end{itemize}}
\ifdraft
  \newenvironment{pts}{\begin{ptsinner}}{\end{ptsinner}}
\else
  \excludecomment{pts}
\fi

\newcommand{\todo}[1]{\ifdraft\textcolor{drafttodo}{\textbf{[TODO: #1]}}\fi}

\newcommand{\para}[1]{\smallskip\noindent\textbf{#1.}}

\newcommand{\subpara}[1]{\smallskip\noindent\emph{#1.}}

\newsavebox{\tabbox}

\usepackage{dblfloatfix}

\newcommand{\attrib}{\alpha}

\definecolor{darkgreen}{HTML}{326C89}
\definecolor{orange}{HTML}{D16D3B}
\definecolor{grey}{HTML}{6D7784}
\definecolor{purple}{HTML}{754FB8}
\definecolor{green}{HTML}{4A895C}
\definecolor{jhublue}{HTML}{002D72}

\newcommand{\StoC}{\textbf{\textcolor{darkgreen}{S2C}}}
\newcommand{\ET}{\textbf{\textcolor{orange}{ET}}}
\newcommand{\nom}{\textbf{\textcolor{grey}{Nom}}}
\newcommand{\CPO}{\textbf{\textcolor{purple}{CPO}}}
\newcommand{\Lag}{\textbf{\textcolor{green}{Lag}}}
\newcommand{\ETsf}{{\ET}\dag}
\newcommand{\nomsf}{{\nom}\dag}
\newcommand{\CPOsf}{{\CPO}\dag}
\newcommand{\Lagsf}{{\Lag}\dag}

\newcommand{\wh}{\widehat}

\DeclareMathOperator*{\expect}{\mathbb{E}}

\DeclareMathOperator{\expl}{expl}          %
\newcommand{\shield}{\text{\tiny{\faShield*}}}

\newcommand{\reals}{\mathbb{R}}

\newcommand{\state}{s}                      %
\newcommand{\statespace}{\mathcal{S}}
\newcommand{\failset}{\mathcal{F}}

\newcommand{\action}{\ctrl}                     %
\newcommand{\actionSet}{\mathcal{A}}

\newcommand{\transkernel}{P}                %
\newcommand{\discount}{\gamma}
\newcommand{\tdisc}{t}

\newcommand{\iagent}{{i}}
\newcommand{\ego}{e}

\newcommand{\turn}{\sigma}                  %

\newcommand{\outcome}{J}                    %
\newcommand{\reward}{r}                     %
\newcommand{\consFunc}{g}                   %
\newcommand{\consTot}{g_{\mathrm{tot}}}     %
\newcommand{\safeSet}{\Omega}               %
\newcommand{\safeSetMax}{{\safeSet^{*}}}

\newcommand{\valFunc}{V}
\newcommand{\qFunc}{Q}

\newcommand{\policy}{\pi}
\newcommand{\policySet}{\Pi}

\newcommand{\fallback}{\policy^{\shield}}
\newcommand{\actionSafe}[1]{\actionSet^{#1}_{\mathrm{safe}}}
\newcommand{\policySafe}[1]{\policySet^{#1}_{\mathrm{safe}}}
\newcommand{\policyAll}[1]{\policySet^{#1}}
\newcommand{\policyExecd}[1]{\policy^{#1}_{\mathrm{exec}}}

\newcommand{\safetyFilter}{\phi}

\newcommand{\game}{\mathcal{G}}
\newcommand{\gameSC}{\game_{\mathrm{SC}}}   %
\newcommand{\gameFilt}{\game_{\safetyFilter}}
\newcommand{\ctrl}{a}

\usepackage{ifthen}
\newboolean{include-notes}
\newboolean{include-new}
\newboolean{include-remove}
\setboolean{include-notes}{true}
\setboolean{include-new}{false}
\setboolean{include-remove}{false}

\usepackage{xcolor}
\usepackage[normalem]{ulem}
\newcommand{\haimin}[1]{\ifthenelse{\boolean{include-notes}}{\textcolor{magenta}{\textbf{Haimin:} #1}}{}}
\renewcommand{\todo}[1]{\ifthenelse{\boolean{include-notes}}{\textcolor{teal}{\textbf{TODO:} #1}}{}}
\newcommand{\remove}[1]{\ifthenelse{\boolean{include-remove}}{\textcolor{red}{\sout{#1}}}{}}
\newcommand{\new}[1]{\ifthenelse{\boolean{include-new}}{\textcolor{purple}{#1}}{#1}}
\newcommand{\ruihan}[1]{\ifthenelse{\boolean{include-notes}}{\textcolor{blue}{\textbf{Ruihan:} #1}}{}}

\usepackage{lipsum}

\DeclareDocumentEnvironment{example}{}{\noindent\textbf{Running Example:}\itshape}{}

\newcommand{\iter}{k}                %
\newcommand{\costBudget}{b}          %
\newcommand{\evalPool}{\mathcal{B}}  %
\newcommand{\numEp}{n}               %
\newcommand{\poolProb}{p}            %
\newcommand{\oppPool}{\mathcal{P}}   %

\newcommand{\nashTol}{\epsilon}

\newacronym{mdp}{MDP}{Markov decision process}
\newacronym{mg}{MG}{Markov game}
\newacronym{scmg}{SC-MG}{Safety-Critical Markov Game}
\newacronym{cmdp}{CMDP}{constrained Markov decision process}
\newacronym{ne}{NE}{Nash equilibrium}
\newacronym{br}{BR}{best response}
\newacronym{NE}{NE}{Nash equilibrium}

\newacronym{hj}{HJ}{Hamilton--Jacobi}
\newacronym{hji}{HJI}{Hamilton--Jacobi--Isaacs}
\newacronym{cbf}{CBF}{control barrier function}
\newacronym{qcbf}{Q-CBF}{Q-control barrier function}
\newacronym{lrsf}{LR-SF}{least-restrictive safety filter}
\newacronym{rss}{RSS}{responsibility-sensitive safety}
\newacronym{ef}{EF}{Exclusive Fault}
\newacronym{ua}{UA}{Unilateral Avoidance}
\newacronym{mv}{MV}{Mutual Viability}

\newacronym{rl}{RL}{reinforcement learning}
\newacronym{marl}{MARL}{multi-agent reinforcement learning}
\newacronym{dnn}{DNN}{deep neural network}
\newacronym{ppo}{PPO}{proximal policy optimization}
\newacronym{ippo}{IPPO}{independent proximal policy optimization}
\newacronym{cpo}{CPO}{Constrained Policy Optimization}
\newacronym{sac}{SAC}{soft actor--critic}
\newacronym{isaacs}{ISAACS}{iterative soft adversarial actor--critic for safety}

\newacronym{mpc}{MPC}{model predictive control}
\newacronym{sf}{SF}{safety filter}

\usepackage{eso-pic}

\newcommand{\david}[1]{\ifthenelse{\boolean{include-notes}}{\textcolor{teal}{\textbf{David:} #1}}{}}

\title{\LARGE \bf Turning Safety into Competence: Minimally Exploitable Robot Policies via Safety-Filtered Reinforcement Learning}

\author{Ruihan Wu$^{1,*}$, Rui Yang$^{1,*}$,
Donggeon David Oh$^{2}$, Duy P.\ Nguyen$^{2,\dagger}$,
and Haimin Hu$^{1,\dagger}$}

\begin{document}

\twocolumn[{%
\renewcommand\twocolumn[1][]{#1}%
\maketitle
\AddToShipoutPictureFG*{%
  \AtPageLowerLeft{%
    \put(\LenToUnit{0.5\paperwidth},\LenToUnit{0.35in}){%
      \makebox(0,0)[b]{%
        \parbox{\textwidth}{%
          \centering
          \normalfont\scriptsize
          This work has been submitted to the IEEE for possible
          publication. Copyright may be transferred without notice,
          after which this version may no longer be accessible.%
        }%
      }%
    }%
  }%
}

\centering
\includegraphics[width=\textwidth]{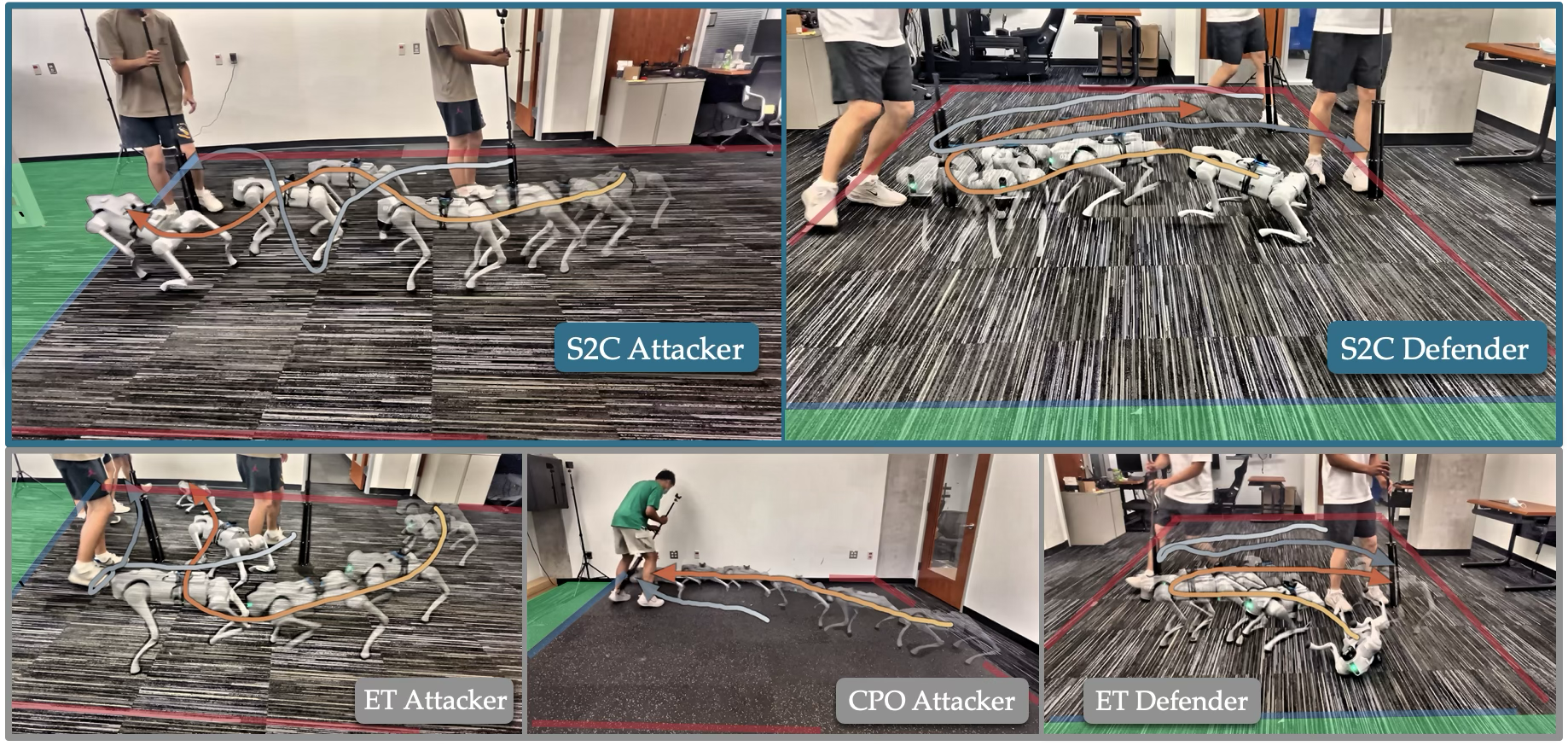}
\captionof{figure}{
Our proposed \textbf{Safety to Competence (S2C)} enforces robot safety without sacrificing competitive performance by training multi-agent reinforcement learning (RL) policies with a pretrained robust safety filter embedded in the environment.
Here, a quadrupedal robot (orange trajectory) plays a touchdown game against 
a human opponent (grey trajectory), where the attacker must reach the green touchdown region before timeout; the robot loses immediately if it falls, crosses the red field boundary, or initiates a collision.
\textbf{Top:} S2C learns effective attacking and defending tactics.
As attacker, it feints past the human to score a touchdown; as defender, it tracks rapid human direction changes and holds the line.
\textbf{Bottom:} safe RL baselines fail frequently under competitive opponent strategies:
the ET attacker exits the field while turning, the CPO attacker collides with the human, and the ET defender falls while pursuing the human.
Project webpage: \textcolor{jhublue}{\url{https://alliance-ai.cs.jhu.edu/s2c/}}.
 }
\label{fig:hardware}
\vspace{0em}
}]
\begingroup
\renewcommand{\thefootnote}{}
\footnotetext{%
$^{1}$R. Wu, R. Yang, and H. Hu are with the Department of Computer
Science, Johns Hopkins University, Baltimore, MD 21218, USA.
{\tt\small \{rwu63,yrui5\}@jh.edu, haimin@cs.jhu.edu}

$^{2}$D. D. Oh and D. P. Nguyen are with the Department of Electrical
and Computer Engineering, Princeton University, Princeton, NJ 08544, USA.
{\tt\small \{do9948,duyn\}@princeton.edu}

$^{*}$Equal contribution.
$^{\dagger}$Equal advising.%
}
\endgroup

\thispagestyle{empty}
\pagestyle{empty}

\begin{abstract}
Robots deployed for competitive tasks must outmaneuver their opponents without sacrificing safety.
Existing approaches, including safe reinforcement learning (RL), train a single policy to achieve task success and avoid failures simultaneously.
This coupling can complicate training and leave the learned policy exploitable by deliberate attacks.
We propose Safety to Competence (S2C), a two-stage RL framework that separates safety synthesis from competitive task learning.
We formulate competitive interactions as safety-critical Markov games and prove that perfect filtering preserves policy non-exploitability when all players commit to safe maneuvers.
S2C learns a robust safety filter via adversarial RL, embeds it in the environment during task policy training, and retains the same filter at deployment.
In simulated touchdown games, S2C outperforms eight safe RL baselines, achieving the highest win rate and Elo rating, and the lowest exploitability.
Hardware stress tests against a human opponent confirm S2C’s competence.
\end{abstract}

\begin{pts}
  \item One-paragraph version of the story, to be turned into the abstract:
  a robot in a competitive interaction is usually made safe by paying for it in
  performance. We give a safety specification and a filter design under which
  the price is provably zero: the filtered game has the same equilibrium and the
  same exploitability as the unfiltered one, and the filtered robot never faults.
  \item Validated in two head-to-head robot scenarios, one asymmetric and one
  symmetric.
\end{pts}

\section{Introduction}
\label{sec:intro}
A wide range of robotic applications involves competitive interactions with other agents in close proximity, ranging from drone racing~\cite{kaufmann2023champion} and motorsports~\cite{wurman2022outracing,oh2025safety} to robot soccer~\cite{haarnoja2024learning} and pursuit--evasion games~\cite{bajcsy2024learning}.
These tasks require strategic play under strict safety constraints, since a safety violation can cause catastrophic hardware damage or bodily harm and, in these games, result in an immediate loss.
Existing approaches for safety-critical, competitive robot motion planning, including a broad class of safe \gls{rl} algorithms~\cite{JMLR:v16:garcia15a}, focus on synthesizing a single policy that achieves safety and task competence simultaneously.
Common techniques include penalizing safety violations in the reward~\cite{Tessler2018RewardCP}, constraining the expected cumulative safety cost~\cite{altman:inria-00074109,Achiam2017ConstrainedPO}, or terminating the episode upon a violation~\cite{Sun2021SafeEB}. 
Coupling the two objectives in this way complicates training and leaves a trade-off between safety and competence that is difficult to tune~\cite{Tessler2018RewardCP}.
Training therefore tends to converge to an overly conservative or overly aggressive strategy.
Both are vulnerable to deliberate attack by an exploitative opponent~(see, for example, Figs.~\ref{fig:hardware} and~\ref{fig:sym-sheet}).

\begin{figure*}[!t]
\centering
\includegraphics[width=\textwidth]{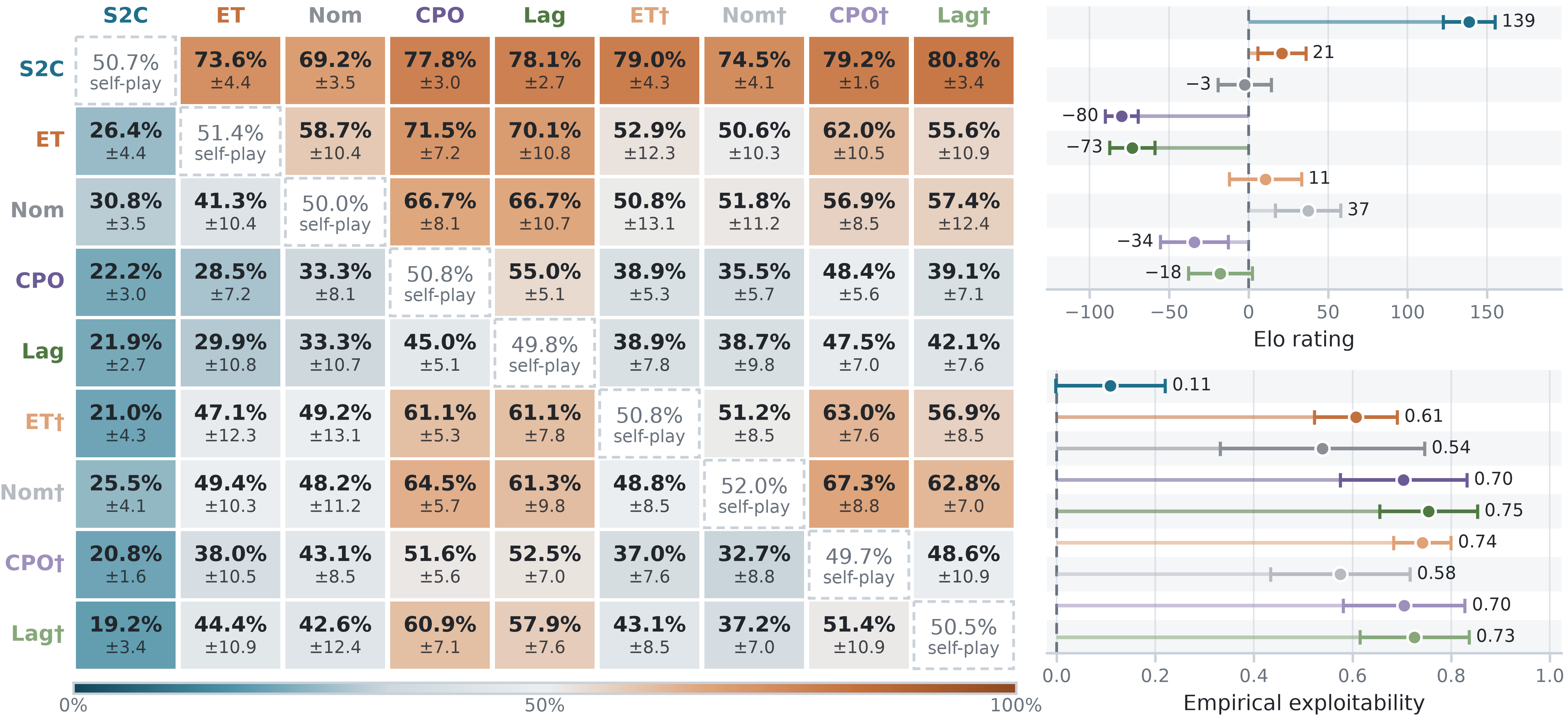}
\caption{Symmetric touchdown game results.
Each cell corresponds to 3900 trials.
S2C outperforms all baselines, achieving the highest Elo rating and lowest empirical exploitability.
\textbf{Left:} round-robin win-rate matrix, where each cell reports the row method's win rate against the column method, averaged over all pairs of training seeds.
\textbf{Top right:} Elo ratings computed jointly based on all matches (higher is better).
\textbf{Bottom right:} empirical exploitability, computed for each seed as the net win rate achieved by its strongest evaluated opponent and then averaged across seeds (lower is better).
} 
\label{fig:sym-sheet}
\end{figure*}

Safety filters~\cite{hsu2023safety} offer a principled approach to separate safety from the task policy.
A safety filter supervises the task policy and modifies its proposed action to prevent an upcoming failure.
Recent work demonstrates that (adversarial) \gls{rl} can be used to synthesize such filters for robots that interact with antagonistic opponents and operate in high-dimensional observation and action spaces~\cite{wang2024magics,nguyenhsu2024gameplay,seo2025uncertainty,Oh2026SynthesisAD}.
Restricting the task policy's actions with a filter is often seen as sacrificing the robot's overall performance. Recent work~\cite{oh2025provably} addresses this concern by proving that for a single agent, perfect (i.e., least restrictive) filtering preserves the asymptotic optimal return in safety-constrained \gls{rl}.
However, this guarantee does not carry over directly to competitive multi-agent settings, where each player's task performance and safety depend on the opponent's strategy.

\para{Contributions}
This paper establishes a game-theoretic foundation for safety-critical, competitive robot interaction and develops a practical algorithmic framework for minimizing policy exploitability without sacrificing safety.
\emph{Our key insight is that separating safety from task objectives in \gls{rl} can substantially reduce policy exploitability}, provided the same safety filter is applied during both training and deployment.
Our key contributions are:
\begin{itemize}
  \item We formulate competitive robot interactions as safety-critical Markov games and show for the first time that perfect safety filters yield non-exploitable Nash equilibria within the class of safe policies.
  \item We propose a two-stage S2C framework that first learns a robust safety filter and then embeds it in the \gls{rl} environment to learn safe and competitive robot policies.
  \item We extensively evaluate S2C in simulated quadruped touchdown games. It achieves the highest win rates and, in the symmetric case, the highest Elo rating and lowest empirical exploitability compared to eight safe \gls{rl} baselines. Hardware tests further demonstrate S2C's robustness to human strategies unseen during training.
\end{itemize}

\begin{pts}
  \item Hook: competitive robot interactions (racing, contested navigation,
  sports, dense merging) require both hard safety and competitive strength.
  \item Gap: current practice buys safety with competence. Penalty shaping,
  \gls{cmdp} budgets and conservative filters all shrink what the robot may do,
  and none of them can say by how much the robot's competitive standing suffered.
  \item Insight: whether the price is zero is a property of the safety
  \emph{specification}, not of the filter. Fault-attributed specifications make
  the interaction zero-sum and unilaterally enforceable; then a least-restrictive
  filter is free.
  \item Contributions (3--4 bullets, mirroring Secs.~\ref{sec:formulation}--\ref{sec:experiments}).
\end{pts}

\section{Related Work}
\label{sec:related}
\para{Safety Filters}
A safety filter supervises a task controller and modifies its proposed action only when necessary to avoid catastrophic failures.
Existing filter designs include, for example, model predictive safety filters (MPSFs)~\cite{wabersich2022predictivecontrolbarrierfunctions}, \glspl{cbf}~\cite{ames2019control, robey2020learning}, and \gls{hj} reachability~\cite{bansal2017hamilton}.
Applying these methods to high-dimensional systems with black-box dynamics presents modeling and computational challenges. MPSFs require a predictive dynamics model and online trajectory optimization, which can be computationally expensive.
Conventional model-based \gls{cbf} filters require known dynamics information; constructing such a certificate is challenging for nonlinear dynamics and calls for a case-by-case analysis.
Classical grid-based reachability methods scale poorly with the state dimension.
Neural approximations have recently enabled filter synthesis for higher-dimensional robotic systems~\cite{fisac2019bridging,bansal2021deepreach}.
To account for disturbances that deliberately seek to cause failures, adversarial safety \gls{rl} methods train a disturbance policy against the controller, jointly learning a safety critic and a fallback policy~\cite{hsu2023isaacs,wang2024magics}.
Oh et al.~\cite{Oh2026SynthesisAD} use this approach to synthesize a robust \gls{qcbf}, enabling model-free, smooth safety filtering. We adopt this technique in our S2C framework to approximate perfect filters and achieve task-efficient safety overrides.

\para{Safe Reinforcement Learning}
Existing safe \gls{rl} methods predominantly incorporate safety into policy learning through reward penalties or cost constraints~\cite{JMLR:v16:garcia15a}.  A widely adopted safe \gls{rl} formulation is the \gls{cmdp}, which maximizes the expected task payoff subject to a budget on the expected cumulative safety cost. \gls{cpo}~\cite{Achiam2017ConstrainedPO}
approximately enforces this budget during policy updates, while Lagrangian methods~\cite{Tessler2018RewardCP} convert the constraint into an adaptive penalty. Early termination instead ends the episode at a failure event, but does not actively prevent the policy from reaching that condition. 
These approaches do not generally guarantee safety in competitive interactions. A positive expected-cost budget can still permit rare, yet catastrophic violations, and reward penalties may favor higher task return at the expense of safety on some trajectories. Moreover, the trajectory distribution depends on the opponent's strategy; a policy that satisfies its cost budget against training opponents may exceed it against an unseen opponent at deployment.
In this paper, we advocate for a safety--task separation principle that first trains a robust safety filter and then embeds it in task-policy training.
We prove that, under perfect filtering, Nash equilibria of the filtered game induce non-exploitable policies. Empirically, we demonstrate that this approach achieves substantially stronger performance and lower exploitability than the above safe \gls{rl} baselines.

\para{Game-Theoretic Multi-Robot Interaction}
Dynamic games model robot interactions where each
agent's outcome depends on both its own policy and others' responses~\cite{basar1998dynamic}.
A popular model-based game solver, the iterative linear-quadratic game~\cite{fridovich2020efficient}, computes local feedback Nash equilibria by linearizing the dynamics and quadratizing the players' objectives.
Lidard et al.~\cite{lidard2024blending} extend this approach to incorporate a data-driven reference policy.
These approaches generally require known dynamics and differentiable objectives, and their runtime computational cost can limit their applicability to high-dimensional robotic systems.
For complex or unknown dynamics, \gls{marl} instead learns strategic policies through simulated interactions. Centralized-training methods use joint information during
training to mitigate the nonstationarity caused by simultaneously learning
agents~\cite{Lowe2017MultiAgentAF}, while self-play and population-based methods train against an evolving set of opponents~\cite{Lanctot2017AUG}. \Gls{ippo}~\cite{Witt2020IsIL} provides a decentralized approach by optimizing policies for each agent using its own
observations without requiring a centralized critic or an explicit opponent model.
We use \gls{ippo} as the task-policy training method in the filtered environment.

\begin{pts}
  \item Safety filters and \gls{hj} reachability~\cite{bansal2017hamilton};
  \glspl{cbf}~\cite{ames2019control}; predictive filters.
  \item Safe \gls{rl}~\cite{garcia2015comprehensive}: \gls{cmdp}~\cite{altman1999constrained}
  and Lagrangian methods, and why an
  expected-cost budget cannot give a per-player guarantee in a game.
  \item Game-theoretic multi-robot interaction: equilibrium solvers,
  \gls{marl} self-play, exploitability as a quality measure.
  \item Fault attribution and right-of-way rules in autonomous driving:
  \gls{rss}~\cite{shalevshwartz2017formal} and its successors fix blame by a
  hand-written rule set over closed-form kinematics; here the same attribution
  principle is enforced by a learned robust certificate and carries an
  equilibrium guarantee.
\end{pts}

\section{Problem Formulation}
\label{sec:formulation}

We consider competitive, safety-critical interactions between two robots (or two groups of robots), in which each player seeks to maximize its task performance over time against its opponent; a player \textit{loses immediately} if it violates a safety constraint.
Formally, we model such interactions with the following Markov game.

\begin{definition}[Safety-Critical Markov Game]
\label{def:scmg}
A \gls{scmg} is defined as the tuple
\begin{equation}
\label{eq:scmg-tuple}
\begin{split}
\gameSC := \big(
\statespace,\actionSet^1,\actionSet^2,
\transkernel,\discount;
\reward,\rho;
\consFunc^1,\consFunc^2;
\consFunc^{1,\mathrm{c}},
\consFunc^{2,\mathrm{c}};
\attrib
\big),
\end{split}
\end{equation}
where $\state=(\state^1,\state^2)\in\statespace$ is the physical joint state,
and $\actionSet^\iagent$ is the action space for player $\iagent\in\{1,2\}$,
$\transkernel(\cdot\mid\state,\action^1,\action^2)$ is the transition kernel,
$\discount\in(0,1)$ is a discount factor,
task reward
$\reward:\statespace\rightarrow\reals$
is bounded,
parameter $\rho>0$ specifies the absorbing win--loss reward, and $\attrib$ is the \textit{adjudication map}, which we explain next.

\noindent\emph{Safety Decomposition.}
For each player $\iagent$, safety is specified by a \emph{solo} margin function $\consFunc^\iagent(\state^\iagent)$
and a \emph{coupled} margin $\consFunc^{\iagent,\mathrm{c}}(\state^1,\state^2)$, where negative values indicate violations.
The total safety margin is
$\consTot^\iagent(\state)
:=
\min\left\{
\consFunc^\iagent(\state^\iagent),
\consFunc^{\iagent,\mathrm{c}}(\state^1,\state^2)
\right\},
$
and the corresponding \textit{failure set} is
$
\failset^\iagent
:=
\left\{
\state\in\statespace
\;\middle|\;
\consTot^\iagent(\state)<0
\right\}.
$
If $\state$ enters $\failset^1\cup\failset^2$, an \textit{adjudicated outcome} is triggered and the game is terminated.
\end{definition}

\para{Fault Attribution}
In an \gls{scmg}, the winning condition depends on fault attribution.
Conceptually, a player $\iagent$ wins if the opposing party violates safety.
However, it is possible that both players' safety specifications are violated simultaneously, i.e., $\state \in \failset^1\cap\failset^2$.
Below, we specify how such states are adjudicated.

\begin{assumption}[Unique Fault Attribution]
\label{assmp:ufa}
The adjudication map $\attrib: \statespace \rightarrow \{1,2\}$ identifies the \textit{unique} player held at fault at each state $\state \in \failset^1\cap\failset^2$, and $\attrib(\state)=\iagent$ whenever $\state \in \failset^\iagent\setminus\failset^{-\iagent}$.
\end{assumption}

\begin{example}
Throughout the paper, we use a \textit{symmetric} two-player quadruped touchdown game as a running example.
The game is played on a $5.2\times3.0$ m field, with two Unitree Go2 quadrupedal robots initialized at opposite ends and competing to reach the opponent's touchdown line first.
A player receives a (sparse) reward $\reward = +1$ when it first reaches its specified touchdown line and $-1$ when the opponent does.
The solo safety margin $\consFunc^\iagent$ is defined as the minimum distance to the ground and the field boundary, and the coupled margin $\consFunc^{\iagent,\mathrm{c}}$ measures the relative distance to the opponent.
The adjudication map $\attrib$ is defined such that, in the event of an inter-robot collision, the robot with the higher velocity projected toward the opponent (i.e., one that closes faster on its opponent) is attributed fault.
\end{example}

In order to formally show that \gls{scmg}~\eqref{eq:scmg-tuple} is zero-sum, we consider an equivalent state space $\bar{\statespace}:=
\big(\statespace\setminus(\failset^1\cup\failset^2)\big)\cup\{\top,\bot\}$, where $\top$ and $\bot$ are \textit{absorbing states} denoting an immediate win and loss for Player 1.
Subsequently, we consider the following augmented reward:
\begin{equation}
\label{eq:augmented-reward}
\bar{\reward}(\state) := \begin{cases} +\rho, & \state=\top, \\ -\rho, & \state=\bot, \\ \reward(\state), & \state\in\statespace\setminus(\failset^1\cup\failset^2). \end{cases}
\end{equation}
At each time, player~$\iagent$ receives stage reward $\turn^\iagent\bar{\reward}$, where $\turn^1:=+1$ and $\turn^2:=-1$.
Consequently, along every realized trajectory,
$
\sum_{\tdisc=0}^{\infty}\discount^\tdisc\big(\turn^1\bar{\reward}(\state_\tdisc)+\turn^2\bar{\reward}(\state_\tdisc)\big)=0
$.

\begin{proposition}
\label{prop:zero-sum}
Under Assumption~\ref{assmp:ufa}, \gls{scmg}~\eqref{eq:scmg-tuple} is zero-sum.
\end{proposition}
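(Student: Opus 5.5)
The plan is to make precise the construction sketched just before the statement. The idea is to build, from $\gameSC$, a well-defined Markov game on the augmented space $\bar{\statespace}$ whose per-stage payoffs to the two players sum to zero. Then we argue that this game has the same outcomes as the original adjudicated game.

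\emph{Step 1: the augmented transition kernel.} For $\state\in\statespace\setminus(\failset^1\cup\failset^2)$ and actions $(\action^1,\action^2)$, let $\state'\sim\transkernel(\cdot\mid\state,\action^1,\action^2)$. If $\state'\notin\failset^1\cup\failset^2$, keep $\state'$. Otherwise, map $\state'$ to an absorbing state determined by Assumption~\ref{assmp:ufa}: send it to $\bot$ if $\attrib(\state')=1$, since Player~1 is at fault and so loses, and to $\top$ if $\attrib(\state')=2$. Both $\top$ and $\bot$ transition to themselves under every action.

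This gives a kernel $\bar{\transkernel}$ on $\bar{\statespace}$. The key point is that it is single-valued. Since $\attrib$ takes values in $\{1,2\}$ and is unique on $\failset^1\cap\failset^2$, no failure state is sent to both $\top$ and $\bot$. On the one-sided failure sets $\failset^\iagent\setminus\failset^{-\iagent}$, the assumption forces $\attrib=\iagent$, so the map agrees with the rule that the violating player loses. Hence every terminal event has exactly one winner and one loser, consistent with the rules of $\gameSC$. Measurability of $\attrib$ and of the failure sets is assumed implicitly, so that $\bar{\transkernel}$ is a valid kernel.

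\emph{Step 2: payoffs.} With the stage reward $\turn^\iagent\bar{\reward}$ from~\eqref{eq:augmented-reward}, we have $\turn^1\bar{\reward}(\state)+\turn^2\bar{\reward}(\state)=0$ pointwise on $\bar{\statespace}$. Player~$\iagent$'s discounted return along any trajectory is $\outcome^\iagent=\sum_{\tdisc}\discount^\tdisc\turn^\iagent\bar{\reward}(\state_\tdisc)$. This series converges absolutely because $\bar{\reward}$ is bounded by $\max\{\sup|\reward|,\rho\}$ and $\discount<1$. Therefore $\outcome^1+\outcome^2=0$ on every realized trajectory. For any policy pair $(\policy^1,\policy^2)$ and initial state, dominated convergence lets us exchange the expectation and the sum, giving $\expect[\outcome^1]+\expect[\outcome^2]=0$. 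That is the zero-sum property.

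\emph{Step 3: equivalence, the main obstacle.} We must confirm that the augmented game faithfully represents $\gameSC$, meaning that termination upon entering $\failset^1\cup\failset^2$ coincides with absorption in $\top$ or $\bot$ with the correct sign. Two issues need care.

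First, the loss must be a fixed-sign absorbing reward and not a task reward evaluated at the failure state. This is exactly why the failure states are removed from $\bar{\statespace}$ and replaced by $\top$ and $\bot$.

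Second, $\rho$ accrues at every step after absorption. The terminal payoff is then $\pm\rho\,\discount^{T}/(1-\discount)$, where $T$ is the hitting time. This is a monotone rescaling of a win or loss, so it preserves which player is rewarded.

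Without Assumption~\ref{assmp:ufa}, a state in $\failset^1\cap\failset^2$ would have to be assigned either a double loss, where both players receive $-\rho$ and the sum is nonzero, or an ambiguous outcome. So the argument must make explicit that uniqueness of $\attrib$ is precisely what rules this out.
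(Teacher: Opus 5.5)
Your proposal is correct and follows the paper's route. The paper collapses failure states into the absorbing states $\top$ and $\bot$ via the adjudication map $\alpha$, after which the stage payoffs $\sigma^1\bar r+\sigma^2\bar r$ cancel pointwise, so the discounted returns sum to zero along every realized trajectory and hence in expectation. Your explicit construction of the augmented kernel $\bar P$, which the paper uses in Definition~\ref{def:filtered-game} but never writes out, and your boundedness and convergence remarks make rigorous what the paper leaves implicit.
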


\para{Exploitability and Game Value}
To quantify a policy's~\textit{robust competitiveness} in this zero-sum game, we next establish the game's \textit{value} and introduce \emph{exploitability}, which measures how vulnerable a robot is to a best-responding, exploitative opponent that deliberately targets its weaknesses.

Recall that the \gls{scmg}~\eqref{eq:scmg-tuple} is a discounted zero-sum stochastic game with bounded stage rewards.
Under the standard minimax assumptions~\cite{shapley1953stochastic}, the discounted stochastic game guarantees that its max--min and min--max payoffs coincide.
We therefore obtain the following proposition.

\begin{proposition}[Existence of the Game Value]
\label{prop:game-value}
The \gls{scmg}~\eqref{eq:scmg-tuple} admits a value at every $\state\in\bar{\statespace}$:
\begin{equation}
\label{eq:game-value}
\begin{aligned}
\valFunc^*(\state)
&=\sup_{\policy^1\in\policyAll{1}}\inf_{\policy^2\in\policyAll{2}}\outcome(\policy^1,\policy^2)(\state)\\
&=\inf_{\policy^2\in\policyAll{2}}\sup_{\policy^1\in\policyAll{1}}\outcome(\policy^1,\policy^2)(\state),
\end{aligned}
\end{equation}
where $\outcome(\policy^1,\policy^2)(\state):=\expect\!\left[\sum_{\tdisc=0}^{\infty}\discount^\tdisc\bar{\reward}(\state_\tdisc)\,\middle|\,\state_0=\state\right]$.
\end{proposition}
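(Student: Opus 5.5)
The plan is to reduce Proposition~\ref{prop:game-value} to Shapley's theorem for discounted zero-sum stochastic games~\cite{shapley1953stochastic}. This means recasting the \gls{scmg} on the augmented space $\bar{\statespace}$ as a standard discounted stochastic game with bounded stage payoff $\bar{\reward}$, and then invoking the minimax equality.

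\textbf{Step 1: build the augmented kernel.} From $\transkernel$, define a kernel $\bar{\transkernel}$ on $\bar{\statespace}$ by redirecting probability mass.
\begin{itemize}
\item For $\state\in\statespace\setminus(\failset^1\cup\failset^2)$, mass that $\transkernel(\cdot\mid\state,\action^1,\action^2)$ places on a failure state $\state'$ is sent to $\top$ if $\attrib(\state')=2$ and to $\bot$ if $\attrib(\state')=1$.
\item $\top$ and $\bot$ are made absorbing under every action pair.
\end{itemize}
This requires $\attrib$ to be measurable, which I take as part of the standard assumptions. Assumption~\ref{assmp:ufa} guarantees that every failure state is routed to exactly one of $\top$ or $\bot$, so $\bar{\transkernel}$ is a well-defined transition kernel.

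\textbf{Step 2: check the payoff is zero-sum and discounted.} Since $\reward$ is bounded and $\rho<\infty$, we have $|\bar{\reward}|\le M:=\max\{\sup|\reward|,\rho\}$. Hence $\outcome(\policy^1,\policy^2)(\state)$ is well defined, with $|\outcome|\le M/(1-\discount)$, for every behavioral policy pair. By Proposition~\ref{prop:zero-sum}, Player~1 receives $\bar{\reward}$ and Player~2 receives $-\bar{\reward}$, so $\outcome$ is the payoff of a two-player zero-sum game.

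\textbf{Step 3: apply Shapley's theorem.} Define the Shapley operator
\begin{equation*}
(T\valFunc)(\state)=\operatorname{val}\Big[\bar{\reward}(\state)+\discount\textstyle\int \valFunc(\state')\,\bar{\transkernel}(d\state'\mid\state,\cdot,\cdot)\Big],
\end{equation*}
where $\operatorname{val}$ denotes the value of the one-shot matrix (or, more generally, compact) game over mixed actions. This operator is a $\discount$-contraction in the sup norm, because the value of a one-shot game is $1$-Lipschitz in its payoff. Its unique fixed point $\valFunc^*$ is then the value of the game. At the absorbing states this gives explicitly $\valFunc^*(\top)=\rho/(1-\discount)$ and $\valFunc^*(\bot)=-\rho/(1-\discount)$.

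The two inequalities in \eqref{eq:game-value} are obtained as follows.
\begin{itemize}
\item \emph{Player~1 guarantees at least $\valFunc^*$.} Let $\policy^1$ pick one-shot optimal mixed actions for the fixed-point game at each state. For any $\policy^2$, iterating the Bellman inequality gives $\outcome(\policy^1,\policy^2)\ge\valFunc^*$.
\item \emph{Player~2 holds Player~1 to at most $\valFunc^*$.} The symmetric construction for Player~2 gives $\outcome(\policy^1,\policy^2)\le\valFunc^*$ for all $\policy^1$.
\end{itemize}
Together these yield $\sup\inf\ge\valFunc^*\ge\inf\sup$. The reverse inequality $\sup\inf\le\inf\sup$ always holds, so both expressions equal $\valFunc^*$.

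\textbf{Main obstacle.} The delicate point is the existence of one-shot optimal strategies, and the measurability of the selectors, when the action spaces $\actionSet^\iagent$ or the state space are continuous. For finite spaces this is von~Neumann's theorem. In general I would invoke the ``standard minimax assumptions'' cited in the paper: compact action sets, and a kernel that is continuous in the actions. Under these, Sion's minimax theorem gives the one-shot value, and a measurable selection theorem supplies stationary optimal strategies. I would state these assumptions explicitly rather than prove them.
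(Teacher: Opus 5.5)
Your proposal is correct and takes the same route as the paper, which justifies the proposition only by citing Shapley's theorem for discounted zero-sum stochastic games with bounded stage rewards under ``standard minimax assumptions.'' Your construction of $\bar{\transkernel}$ through $\attrib$, the $\discount$-contraction of the Shapley operator, and the two-sided guarantee argument unpack exactly what that citation covers, and stating the compactness, continuity, and measurability hypotheses explicitly is more careful than the paper. One caveat: impose those hypotheses on the augmented kernel $\bar{\transkernel}$ rather than on $\transkernel$, because routing states to $\top$ or $\bot$ is discontinuous at the boundary of $\failset^1\cup\failset^2$, so mere weak continuity of $\transkernel$ does not carry over.
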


Proposition~\ref{prop:game-value} implies that there exists a stationary \emph{\gls{NE}} policy pair $(\policy^{1,*},\policy^{2,*})$ that guarantees the value $\valFunc^*$ against every opponent policy.
In this case, an \gls{NE} policy is \textit{non-exploitable}: neither player can improve its expected payoff by changing to another policy~\cite[Ch.~2]{basar1998dynamic}.

\begin{definition}[Exploitability]
\label{def:expl}
For a policy $\policy^\iagent$, its \textit{exploitability} is defined as the gap between the game value and the worst-case payoff:
\begin{equation}
\label{eq:expl}
\expl(\policy^\iagent)(\state):=\turn^\iagent\valFunc^*(\state)-\inf_{\policy^{-\iagent}\in\policyAll{-\iagent}}\turn^\iagent\outcome(\policy^1,\policy^2)(\state),
\end{equation}
where $\turn^1=+1$ and $\turn^2=-1$.
\end{definition}

At an \gls{NE} $(\policy^{1,*},\policy^{2,*})$ of \gls{scmg}, neither player's strategy can be exploited, i.e., $\expl(\policy^{1,*})(\state)=\expl(\policy^{2,*})(\state)=0$.

\begin{figure*}[t]
\centering
\includegraphics[width=\textwidth]{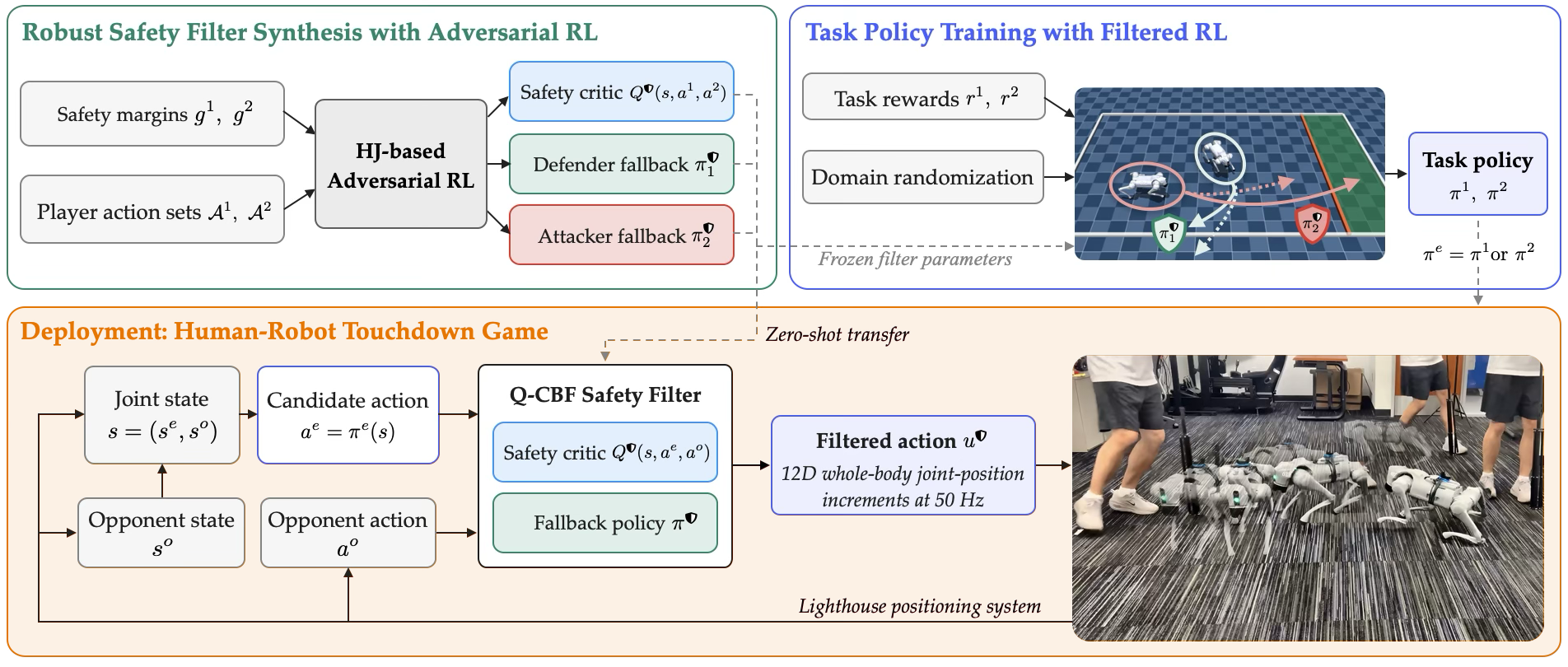}
\caption{Overview of the proposed \textbf{S2C} training and deployment pipeline, illustrated with the asymmetric touchdown game.}
\label{fig:system-overview}
\end{figure*}

\section{Analysis: Non-Exploitability under Perfect Safety Filtering}
\label{sec:main-result}

In this paper, our goal is to improve robot competitiveness with safe yet minimally exploitable policies.
However, direct equilibrium learning in an \gls{scmg} is challenging since it couples task optimization with safety reasoning; each player needs to preempt safety risks induced by close-call interactions with the opponent while searching for winning strategies.
To this end, we propose a two-stage policy synthesis framework that \textit{separates safety enforcement from competitive task objectives} (Fig.~\ref{fig:system-overview}).
First, we synthesize a robust safety filter~\cite{hsu2023safety} for each player using \gls{hj}-based adversarial \gls{rl}.
Second, we train task policies in an \gls{rl} environment that incorporates the filters into its transition dynamics, and deploy each policy with the same filter.
Because the same filters are used during both training and deployment, the task policy adapts to the filtered game dynamics without requiring an explicit filter model or additional reward terms.

In this section, we establish the theoretical foundation for safety--task separation in zero-sum, safety-critical games.
Our main result shows that, under perfect filtering, an \gls{NE} of the filtered game induces an \gls{NE} of the original \gls{scmg} in the space of safe policies. Consequently, equilibrium policies executed under the same filters remain safe and non-exploitable against safety-aware opponents.
We then introduce a practical instantiation of this framework in Sec.~\ref{sec:method}.

Let $\safeSet^{\iagent*}\subseteq\statespace\setminus\failset^\iagent$ denote the \textit{maximal robust controlled-invariant set}~\cite{blanchini1999set} for player~$\iagent$.
Inside $\safeSet^{\iagent*}$, player $\iagent$ is guaranteed a strategy to stay within $\safeSet^{\iagent*}$ regardless of the opponent's actions.
Next, we define the \textit{safe action set} as
\begin{equation*}
\actionSafe{\iagent}(\state):=\left\{\action^\iagent\mid\transkernel(\safeSet^{\iagent*}\mid\state,\action^\iagent,\action^{-\iagent})=1,\ \forall\action^{-\iagent}\in\actionSet^{-\iagent}\right\}.
\end{equation*}

\begin{definition}[Perfect Safety Filter~\cite{hsu2023safety}]
\label{def:filter}
A (robust) safety filter $\safetyFilter^\iagent:\statespace\times\actionSet^\iagent\rightarrow\actionSet^\iagent$ is \emph{perfect} if it satisfies
$
\safetyFilter^\iagent(\state,\action)\in\actionSafe{\iagent}(\state),~\forall\action\in\actionSet^\iagent,~\forall \state\in\safeSet^i,
$
and leaves every safe action unchanged:
$
\safetyFilter^\iagent(\state,\action)=\action,~ \forall\action\in\actionSafe{\iagent}(\state),\forall \state\in\safeSet^i
$.
\end{definition}

\begin{definition}[Filtered Game]
\label{def:filtered-game}
Given perfect filters $\safetyFilter=(\safetyFilter^1,\safetyFilter^2)$, we define a \textit{filtered game} as
\begin{equation}
\label{eq:filtered-game}
\gameFilt:=(\bar{\statespace},\actionSet^1,\actionSet^2,\transkernel_{\safetyFilter},\bar{\reward},\discount),
\end{equation}
where $\transkernel_{\safetyFilter}(\cdot\mid\state,\action^1,\action^2):=\bar{\transkernel}(\cdot\mid\state,\safetyFilter^1(\state,\action^1),\safetyFilter^2(\state,\action^2))$.
\end{definition}

Since perfect filters leave safe actions unchanged, any payoff improvement a player can achieve by deviating to another safe policy is also achievable in the filtered game.
This property allows us to compare the equilibria of a filtered game $\gameFilt$ with those of $\gameSC^{\mathrm{safe}}$, the original \gls{scmg} with each player committed to a safe strategy.

\begin{theorem}[Non-Exploitability under Perfect Filtering]
\label{thm:main}
Let $\safetyFilter=(\safetyFilter^1,\safetyFilter^2)$ be perfect filters, and let $\gameSC^{\mathrm{safe}}$ denote the original \gls{scmg} with each player restricted to safe policy classes $\policySafe{\iagent}:=\left\{\policy^\iagent\in\policyAll{\iagent}\;\middle|\;\policy^\iagent\big(\actionSafe{\iagent}(\state)\mid\state\big)=1,\ \forall\,\state\in\safeSet^{\iagent*}\right\}$.
For any initial state $\state\in\safeSetMax$, an NE $(\policy^{1,*},\policy^{2,*})$ of $\gameFilt$ induces, through the same filters, an executed pair $(\policyExecd{1},\policyExecd{2})$ that is an NE of $\gameSC^{\mathrm{safe}}$. 
Consequently, both players are non-exploitable in their respective games:
\begin{equation}
\label{eq:transfer}
\expl_{\gameSC^{\mathrm{safe}}}(\policyExecd{\iagent})(\state)=\expl_{\gameFilt}(\policy^{\iagent,*})(\state)=0.
\end{equation}
\end{theorem}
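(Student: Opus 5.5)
The plan is to build a payoff-preserving correspondence between policies of $\gameFilt$ and safe policies of $\gameSC^{\mathrm{safe}}$, and then transfer the equilibrium inequalities across it. Given a (possibly randomized, stationary) policy $\policy^\iagent$ of $\gameFilt$, I define the executed policy $\policyExecd{\iagent}(\cdot\mid\state)$ as the pushforward of $\policy^\iagent(\cdot\mid\state)$ under $\action\mapsto\safetyFilter^\iagent(\state,\action)$.

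The first step is to check that this map lands in, and is onto, the safe classes. Because $\safetyFilter^\iagent$ is perfect, $\safetyFilter^\iagent(\state,\action)\in\actionSafe{\iagent}(\state)$ for every $\state\in\safeSet^{\iagent*}$, so $\policyExecd{\iagent}\in\policySafe{\iagent}$. Conversely, because $\safetyFilter^\iagent$ fixes safe actions, every $\tilde\policy^\iagent\in\policySafe{\iagent}$ is its own image: viewed as a policy of $\gameFilt$, it executes unchanged. Hence the pushforward map from $\policyAll{\iagent}$ onto $\policySafe{\iagent}$ is surjective, with a right inverse given by the inclusion.

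The second step is to show that payoffs agree. For any pair $(\policy^1,\policy^2)$ and any $\state\in\safeSetMax$, I will prove
\[
\outcome_{\gameFilt}(\policy^1,\policy^2)(\state)=\outcome_{\gameSC}(\policyExecd{1},\policyExecd{2})(\state).
\]
The argument is that the two induced state processes have the same law, since the one-step kernels coincide by construction: $\transkernel_{\safetyFilter}$ averaged over $\policy^1\otimes\policy^2$ equals $\bar\transkernel$ averaged over the pushforwards. Equal laws plus the same reward $\bar\reward$ give equal discounted sums.

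The part I expect to be the main obstacle is making sure the filter is only ever invoked where it is defined, namely on $\safeSet^{\iagent}$ (I read this as $\safeSet^{\iagent*}$). I will interpret $\safeSetMax$ as $\safeSet^{1*}\cap\safeSet^{2*}$. By the definition of $\actionSafe{\iagent}$, a filtered action keeps the state in $\safeSet^{\iagent*}$ with probability one whatever the opponent does, so by induction the joint state stays in $\safeSetMax$ almost surely. In particular it never enters $\failset^1\cup\failset^2$, so the absorbing states $\top,\bot$ are never reached. The same invariance holds in $\gameSC^{\mathrm{safe}}$ when both players use safe policies. This is the delicate point: off $\safeSetMax$ the filter carries no guarantee, so the induction must start from $\state\in\safeSetMax$ and use only the robust invariance supplied by each player's own filter.

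With payoff equality and surjectivity in hand, the equilibrium transfer is direct. For any safe deviation $\tilde\policy^1\in\policySafe{1}$, since it is a fixed point of the filter,
\[
\outcome_{\gameSC}(\tilde\policy^1,\policyExecd{2})=\outcome_{\gameFilt}(\tilde\policy^1,\policy^{2,*})\le\outcome_{\gameFilt}(\policy^{1,*},\policy^{2,*})=\outcome_{\gameSC}(\policyExecd{1},\policyExecd{2}).
\]
The symmetric inequality holds for Player 2 with the sign $\turn^2$. Therefore $(\policyExecd{1},\policyExecd{2})$ is an NE of $\gameSC^{\mathrm{safe}}$.

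For the exploitability claim, I will argue as follows. Both $\gameFilt$ and $\gameSC^{\mathrm{safe}}$ are zero-sum, by Proposition~\ref{prop:zero-sum} applied to the filtered kernel, and an NE attains the value, as in Proposition~\ref{prop:game-value}. The value of each game therefore equals the common equilibrium payoff, and since the two games have equal payoffs, their values coincide. The infimum over opponents in \eqref{eq:expl} taken over $\policySafe{-\iagent}$ equals the infimum over $\policyAll{-\iagent}$ in $\gameFilt$, by surjectivity and payoff equality. Both infima equal the value, which gives \eqref{eq:transfer}.
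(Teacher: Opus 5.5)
Your proposal is correct and follows the same route as the paper's proof. A safe unilateral deviation is left unchanged by the perfect filter, so it can be played as a nominal policy in $\gameFilt$ with the same payoff, and the NE inequalities of $\gameFilt$ transfer to $\gameSC^{\mathrm{safe}}$. You make explicit what the paper's sketch leaves implicit: the pushforward definition of the executed policies, equality of the induced state laws, and forward invariance of $\safeSetMax$. That invariance is also what justifies your ``surjectivity'' claim, since a safe policy and its filtered image are only guaranteed to agree on $\safeSet^{\iagent*}$.
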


\begin{proof}
Let $(\policy^{1,*},\policy^{2,*})$ be an \gls{NE} of $\gameFilt$. Suppose a player could improve its payoff by a safe unilateral deviation from the executed pair $(\policyExecd{1},\policyExecd{2})$ in $\gameSC^{\mathrm{safe}}$. Since a perfect filter leaves safe actions unchanged, the deviating policy could be used as a nominal task policy in $\gameFilt$, which achieves the same improvement in $\gameFilt$, contradicting its \gls{NE} property.
Hence, the executed policy pair is an \gls{NE} of $\gameSC^{\mathrm{safe}}$.
The equilibrium payoffs coincide, and Definition~\ref{def:expl} ensures zero exploitability for both players in their respective games.
\end{proof}

Theorem~\ref{thm:main} shows that in competitive games, the \textit{safety--task separation} does not necessarily sacrifice exploitability when both players are safety-aware. Based on this insight, Sec.~\ref{sec:method} presents an \gls{rl} framework for synthesizing competitive robot policies using learned robust safety filters.

\section{Method: Learning Minimally Exploitable Robot Policies with Filtered RL}
\label{sec:method}

In this section, 
we present our main algorithmic contribution, \textbf{Safety to Competence (S2C)}, a two-stage \gls{rl} framework that trains safe and minimally exploitable robot policies.
It sequentially trains a robust safety filter and a task policy with the filter embedded in the environment.
Our overall system architecture and algorithmic approach are summarized
in Fig.~\ref{fig:system-overview} and Algorithm~\ref{alg:s2c}. 
Our framework is modular and agnostic to the specific methods used for filter synthesis, task policy training, and filter deployment.
In this paper, we instantiate filter synthesis, task-policy training, and filter deployment with MAGICS~\cite{wang2024magics},   \gls{ippo}~\cite{Witt2020IsIL}, and Robust \gls{qcbf}~\cite{Oh2026SynthesisAD,oh2025safety}, respectively.

\begin{pts}
  \item Unnumbered opening paragraph. The method is \emph{two training runs}
  and one runtime composition. The first synthesizes a safety certificate from
  the margins alone; the second trains the task policy on $\reward$ alone,
  inside the game that certificate has already closed off. Neither run is given
  the other's objective, and the task policy is never told that a filter exists.
  \item Framing sentence to carry the reader across from
  Sec.~\ref{sec:formulation}: the guarantee assumes a perfect filter and an
  exact equilibrium; Sec.~\ref{sec:method-filter} replaces the first and
  Sec.~\ref{sec:method-task} the second, and neither replacement is free.
\end{pts}

\begin{algorithm}[t]
\caption{S2C: Safety-Filtered Competitive \gls{rl}}
\label{alg:s2c}
\begin{algorithmic}[1]
\Require margins $\consFunc^\iagent,\consFunc^{\iagent,\mathrm{c}}$, task reward $\reward$, 
initial task policy~$\policy^i_{\rm{init}}$, 
robust filter synthesis algorithm \textsc{Filter\_Synthesis},
multi-agent reinforcement learning algorithm \textsc{MARL}
\vspace{3pt}
\Statex \textit{Training Stage 1: Robust Filter Synthesis}
\vspace{1pt}
\State $(\qFunc^\shield,\fallback_1,\fallback_2)\gets$ \textsc{Filter\_Synthesis}($\consFunc^1,\consFunc^{1,\mathrm{c}},\consFunc^2,\consFunc^{2,\mathrm{c}}$)
\State \textproc{Robust\_Filter} $\gets (\qFunc^\shield,\fallback_1,\fallback_2)$
\vspace{3pt}
\Statex \textit{Training Stage 2: Filtered Multi-Agent \gls{rl}}
\State Initialize policy pool: $\oppPool\gets\emptyset$
\State Initialize task policies: $(\policy^1,\policy^2)\gets(\policy^1_{\rm{init}},\policy^2_{\rm{init}})$
\For{$\iter=1,2,\dots$}
  \State Collect rollouts with \textproc{Robust\_Filter}; each player faces its opponent using $\policy^{-\iagent}$ or, with probability $\poolProb$, a policy in $\oppPool$
  \State Train $\policy^1,\policy^2$ with \textsc{MARL}
  \State Periodically update $\oppPool\gets\oppPool\cup\{\policy^1,\policy^2\}$
\EndFor
\Statex \textit{Deployment}
\State $\policy^\ego \gets \policy^1$ or $\policy^2$
\State $\action^\ego_\tdisc\gets$ \textproc{Robust\_Filter}($\state_\tdisc,\policy^\ego$)
\end{algorithmic}
\end{algorithm}

\subsection{Synthesizing Robust Safety Filters}
\label{sec:method-filter}

For each player $\iagent$, we first synthesize its robust safety filter independently of the task objective.
We formulate safety as a zero-sum game between the robot and a worst-case opponent: the ego robot seeks to maintain its own safety, while the adversary actively drives it toward failure.
This yields a learned \textit{safety fallback policy} $\policy^\shield_\iagent$ trained to protect player~$\iagent$ against adversarial opponent behavior.

To solve this safety game for general, high-dimensional interactive tasks (e.g., the touchdown game in the Running Example with 62D observation and 12D action spaces), we use MAGICS~\cite{wang2024magics}, a game-theoretic adversarial \gls{rl} algorithm for neural safety synthesis.
MAGICS jointly trains a safety fallback $\policy^\shield_\iagent$, a worst-case adversarial policy, and a safety critic $\qFunc^\shield$.
It is shown to converge to a local minimax solution, providing a scalable approximation to \gls{hj}-based robust safety analysis that would yield a perfect safety filter~\cite{oh2025provably}.
The fallback $\policy^\shield_\iagent$ and critic $\qFunc^\shield$ form essential parts of the robust safety filter $\safetyFilter^\iagent(\state,\policy^\iagent)$, which takes a state $\state$ and a task policy $\policy^\iagent$ as inputs, simulates the worst-case opponent action, and returns a filtered task action $\action^\iagent$.
We explain the deployment of this robust filter in Sec.~\ref{sec:method-deploy}.

\begin{example}
In the touchdown game, both the safety fallback and critic operate with a 62D observation comprising both ego and opponent's positions of the body frame, body velocities and orientations, axial rotational
rates, and angular velocity.
The robot action $\action^\iagent\in\actionSet^\iagent=[-1,1]^{12}$ is a normalized joint-position increment command for the 12 actuated leg joints, tracked by a joint-level PD controller.
Both the safety fallback and the critic are three-layer MLPs, with hidden layer widths of 512 and 256 units, respectively, both trained with 2048 parallel environments in MuJoCo~\cite{todorov2012mujoco}
on a workstation with an RTX 5080 GPU.
\end{example}

\subsection{Learning Task Policies in the Filtered Environment}
\label{sec:method-task}
We train each agent's \emph{task policy} $\policy^\iagent$ in an \gls{rl} environment that embeds its safety filter $\safetyFilter^\iagent$, as in the filtered game of Definition~\ref{def:filtered-game}.
To train both agents' task policies simultaneously, our framework calls for a \gls{marl} algorithm, e.g., \gls{ippo}~\cite{Witt2020IsIL} or MAPPO~\cite{yu2022surprising}.
At each training step, each player's task policy $\policy^\iagent$ proposes a candidate action $\action^\iagent$, and the simulator executes the action modified by the safety filter.
Crucially, the task policies are learned based on the agents' interaction data with the environment, requiring neither explicit information on the safety filter nor extra reward terms to account for safety overrides.

To stabilize training and prevent overfitting to a specific opponent strategy, we maintain a \textit{policy pool} $\oppPool$ of historical policy checkpoints~\cite{heinrich2015fictitious, Lanctot2017AUG} and sample opponents from this pool in a fraction of parallel environments.
We prioritize opponents with a win rate close to $50\%$ to focus learning on balanced matches.

\begin{example}
The task reward consists of a sparse $\pm1$ touchdown reward 
and a dense locomotion shaping term that is annealed after 1000 iterations.
We train each player's task policy $\policy^\iagent$ with \gls{ippo}~\cite{Witt2020IsIL}.
The task policy uses the same observation space as the safety filter
and outputs 12D joint-position targets, which are converted into joint-position increments prior to filtering.
Each player's policy is parameterized by an MLP with three hidden layers of 512, 256, and 128 neurons.
Training is conducted across 8192 parallel environments with an opponent-pool sampling probability of $\poolProb=0.35$, running for 5000 iterations.
\end{example}

\begin{pts}
  \item Replaces the \emph{perfect filter} idealization. One adversarial
  \gls{rl} run on the joint state (\gls{isaacs}-style
  actor--critic~\cite{hsu2023isaacs}) returns three networks at once: a safety
  critic, an opponent actor conditioned on the ego action, and the fallback
  actor $\fallback$. There is no second stage that turns a value function into
  a filter -- certificate and fallback come out of the same run.
  \item Composing the critic with the opponent actor gives a robust action value
  $\wh\qFunc(\state,\action)$; evaluating it at the fallback gives
  $\wh\valFunc(\state)$. The opponent is adversarial, so what the certificate
  certifies holds whatever the other robot does.
  \item Convergence is decided by a monotone ratchet on the certified set; no
  hand-set threshold enters.
\end{pts}

\subsection{Deploying Task Policies with Robust Safety Filters}
\label{sec:method-deploy}

At deployment, each learned task policy $\policy^\iagent$ is paired with the same robust safety filter $\safetyFilter^\iagent$ used during training.
The filter is constructed from the safety critic $\qFunc^\shield$ and fallback policy $\policy^\shield$ learned in Sec.~\ref{sec:method-filter}.
At every control step, the task policy proposes a candidate action, which is passed through unchanged when it satisfies the learned safety constraint and otherwise corrected by the filter.
Using the same filter during training and deployment is critical to our framework.
During training, the task policy experiences the consequences of filter interventions directly through the filtered environment and learns to \textit{cooperate with the filter} to minimize exploitability (Theorem~\ref{thm:main}) by applying task strategies that avoid unnecessary overrides.

\begin{example}
For the touchdown game, we approximate the perfect filter in Definition~\ref{def:filter} using a \gls{qcbf}, which has been shown to robustly maintain safety while preserving task performance in quadrupedal locomotion~\cite{Oh2026SynthesisAD}, with the learned critic $\qFunc^\shield$ and fallback $\policy^\shield_\iagent$.
In particular, the filter performs a fixed number of gradient-descent steps to search for an action closest to the proposed task action and certified safe by $\qFunc^\shield$.
If no such action is found, the filter uses the learned fallback policy $\fallback_\iagent$.
\end{example}

\section{Experiments}
\label{sec:experiments}

We use our proposed {\StoC} framework to synthesize the safety filter and train touchdown game policies in both symmetric and asymmetric scenarios.
We evaluate their safety and competence at scale in simulation, and deploy the robot in both roles (attacker and defender) against a human opponent in a real-world experiment.

\para{Hypotheses}
Our experiments test two hypotheses that reflect the key advantages of \StoC:
\begin{itemize}
    \item \textbf{H1 (Safe competence).} \emph{\StoC{} yields more competitive policies without sacrificing safety.
}

  \item \textbf{H2 (Robustness and generalization).} \emph{\StoC{} generalizes better to out-of-distribution opponents.}
\end{itemize}

\para{Baselines}
We compare \StoC{} to four baseline safe \gls{rl} methods that jointly optimize safety and task objectives.
\begin{itemize}
  \item \textbf{Early termination (\ET)} ends the current training episode on a safety violation.
  \item \textbf{Reward penalty (\nom)} adds a penalty to the reward for each safety violation.
   \item \textbf{Constrained policy optimization (\CPO)} enforces a violation budget $\costBudget^\iagent$ during a trust-region update.
  \item \textbf{PPO-Lagrangian (\Lag)} enforces a violation budget through an adaptive multiplier.
  \item \textbf{Filtered baselines ($\dagger$)} equips each baseline above with the same filter used by \StoC{} at deployment.
\end{itemize}
All baselines are trained with the same \gls{ippo} algorithm, network architecture,
and training schedule as \StoC.
We also swept additional parameters for the \CPO~and \Lag~baselines.

\para{Metrics}
We select checkpoints that achieve the best self-play payoff for all nine methods and use them to form the evaluation pool $\evalPool$.
We evaluate robot performance with the following metrics:
\begin{itemize}
  \item \textbf{Safe rate (SR).} The fraction of episodes free of a safety violation indicated by $\consTot^i(s) < 0$.
  \item \textbf{Win rate (WR).} The averaged share of won episodes against opponent methods.
  \item \textbf{Empirical exploitability.} 
  We measure empirical exploitability for the symmetric game ($\valFunc^{*}=0$).
  We then evaluate ~\eqref{eq:expl} on the empirical games over $\evalPool$~\cite{Lanctot2017AUG} as $\max_{\policy'\in\evalPool}\wh\outcome(\policy',\policy)$, where
  $\wh\outcome(\policy,\policy')=\big(\numEp_{\mathrm{win}}(\policy)-\numEp_{\mathrm{win}}(\policy')\big)/\numEp$ is the net
  win rate over $\numEp$ episodes.
  \item \textbf{Elo.} We calculate Elo ratings for symmetric games based on all pool matches using the standard scale factor of 400, with bootstrap confidence intervals.
\end{itemize}
Simulation values are mean $\pm$ one standard deviation across the five random training seeds of each method.

\subsection{Simulated Touchdown Games}
\label{sec:sim-results}

\begin{figure}[t]
\centering
\includegraphics[width=\columnwidth]{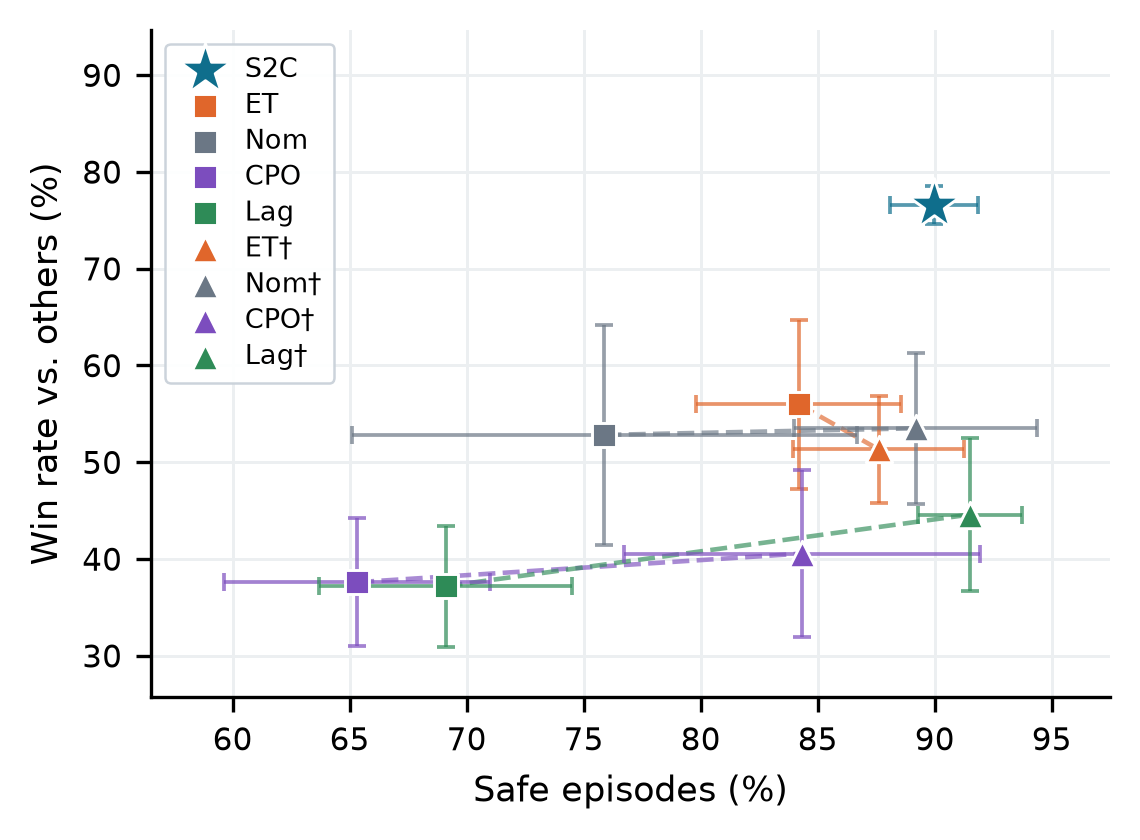}
\caption{Safety versus win rate in symmetric touchdown games.}
\label{fig:pareto-sym}
\end{figure}

\para{Results}
We report the results of the simulated symmetric game (Running Example) in this section. Following the setup in Sections~\ref{sec:method-filter} and~\ref{sec:method-task}, we evaluate all nine methods in a round-robin tournament.
As shown in Fig.~\ref{fig:sym-sheet} and Table~\ref{tab:main}, \StoC{} achieves the highest win rate across all pairwise matches with an average of $76.5\%$. 
\StoC{} also achieves the highest Elo rating ($139$), significantly exceeding the second-place competitor, \nomsf{} ($37$) by more than $100$ points.
In terms of empirical exploitability, the strongest pool opponent achieves a net win rate of $0.11$ against \StoC, compared to $0.54$--$0.75$ against the baselines. 
Because the evaluation pool includes unfiltered baselines, this indicates that \StoC{} remains hard to exploit even by reckless opponents not committed to using safe policies. 
Moreover, \StoC{} achieves a high safe rate of $90.0\%$, close to that of \Lagsf{}.
These results validate \textbf{H1}.

\subpara{Filtered ablation}
We conduct an ablation study by deploying the learned safety filter in \StoC{} for the baselines at runtime (see Figs.~\ref{fig:sym-sheet} and~\ref{fig:pareto-sym}).
The filter enhances safety as expected: the safe rates of filtered baselines improve across the board by up to $22.4$~percentage points (pp), bringing both \Lagsf{} and \nomsf{} to safety levels comparable to \StoC{}.
However, this added safety does not transfer to a notable gain in the robot's competitive competence: baseline win rates change by at most $7.4$\,pp, remaining substantially below \StoC{}, and their exploitability stays high.
This is because the baseline policies were trained in unfiltered environments, and therefore did not learn to cooperate with the safety filter.
In some states, they repeatedly propose unsafe actions, so the filter intervenes frequently and thereby degrades the task performance.
This ablation shows that a safety filter alone does not grant strategic competence; rather, competence must be co-learned within the filtered environment as in \StoC{}.

\subpara{Qualitative observations}
During training, we observe that competitive tactics such as feints and abrupt stop-and-turn maneuvers (see Fig.~\ref{fig:hardware}) emerge noticeably earlier in \StoC{} than in the baseline methods. 
We attribute this to the safety filter intervening to brake or stabilize the robot whenever it approaches danger.
It spares the policy from having to discover constraint-satisfying strategies through trial-and-error failures, allowing it to focus its exploratory effort more directly on task objectives.
In contrast, because the baselines must map safety boundaries through catastrophic failures, individual runs often collapse into overly conservative play or stereotyped gaits that specific opponents can easily exploit during evaluation.

\begin{table}[t]
\centering
\footnotesize
\setlength{\tabcolsep}{5pt}
\caption{Round-robin results in simulated touchdown games.}
\label{tab:main}
\sbox0{\resizebox{\columnwidth}{!}{%
\begin{tabular}{l cc cc}
\toprule
 & \multicolumn{2}{c}{Asymmetric game} & \multicolumn{2}{c}{Symmetric game} \\
\cmidrule(lr){2-3} \cmidrule(lr){4-5}
Method & WR ($\%$) $\uparrow$  & SR ($\%$) $\uparrow$  & WR ($\%$) $\uparrow$  & SR ($\%$) $\uparrow$ \\
\midrule
\StoC & \textbf{77.5}\,$\pm$\,2.8 & \textbf{87.9}\,$\pm$\,2.4 & \textbf{76.5}\,$\pm$\,2.0 & 90.0\,$\pm$\,1.9 \\
\ET & 68.1\,$\pm$\,12.5 & 69.7\,$\pm$\,18.5 & 56.0\,$\pm$\,8.7 & 84.2\,$\pm$\,4.4 \\
\nom & 51.4\,$\pm$\,6.3 & 68.9\,$\pm$\,8.6 & 52.8\,$\pm$\,11.4 & 75.9\,$\pm$\,10.8 \\
\CPO & 22.3\,$\pm$\,2.3 & 46.9\,$\pm$\,5.2 & 37.6\,$\pm$\,6.6 & 65.3\,$\pm$\,5.7 \\
\Lag & 63.0\,$\pm$\,12.9 & 67.6\,$\pm$\,18.2 & 37.2\,$\pm$\,6.3 & 69.1\,$\pm$\,5.4 \\[1pt]
\ETsf & 46.0\,$\pm$\,3.4 & 69.1\,$\pm$\,7.6 & 51.3\,$\pm$\,5.5 & 87.6\,$\pm$\,3.6 \\
\nomsf & 41.6\,$\pm$\,2.4 & 76.6\,$\pm$\,3.0 & 53.5\,$\pm$\,7.8 & 89.2\,$\pm$\,5.2 \\
\CPOsf & 30.6\,$\pm$\,2.3 & 71.4\,$\pm$\,4.4 & 40.5\,$\pm$\,8.6 & 84.3\,$\pm$\,7.6 \\
\Lagsf & 49.5\,$\pm$\,5.0 & 75.7\,$\pm$\,5.4 & 44.6\,$\pm$\,7.9 & \textbf{91.5}\,$\pm$\,2.2 \\
\bottomrule
\end{tabular}}}%
\usebox0
\par\vspace{3pt}%

\parbox{\columnwidth}{%
\scriptsize\raggedright
Each row corresponds to 31200 games.
WR: win rate.
SR: safe rate.}
\end{table}

\subsection{Hardware Stress Tests Against Human Opponents}
\label{sec:hardware}
For the real-world evaluation, we consider an asymmetric attack--defense game where the attacker needs to reach a target line within a time limit to win. This asymmetry induces more intense interactions, such as frequent close contact between the two players.
The asymmetric game is trained and evaluated in simulation under the same
protocol as the symmetric one, where \StoC{}  attains both the highest win rate and
the highest safe rate (Table~\ref{tab:main}). We further conduct this game in
hardware experiments and evaluate the policies against a human opponent.

\para{Hardware System Setup}
We use an HTC Vive Lighthouse system to localize both the Unitree Go2 quadrupedal robot and the human.
This system consists of two base stations placed at the diagonal corners of the field and two SteamVR trackers mounted on the robot and its human opponent. 
Policy inference and the safety filter run on the Jetson Nano at $50$\,Hz. All five methods are trained under the same domain-randomization strategy. We evaluate \StoC{} and all four baselines for ten rounds in each attacker/defender role.

\para{Results}
\begin{table}[t]
\centering
\footnotesize
\setlength{\tabcolsep}{3pt}
\renewcommand{\arraystretch}{1.05}
\caption{Hardware trials against a human opponent.}
\label{tab:hardware}

\begin{tabular*}{\columnwidth}{
@{\extracolsep{\fill}}lcccc@{}
}
\toprule
\multicolumn{5}{c}{\textbf{Robot attacking}}\\
\midrule
Method & TDR ($\%$) $\uparrow$ & SR ($\%$) $\uparrow$ & HPS (m$/$s) & HPA (m$/$s$^2$)\\
\midrule
\StoC
& \textbf{70.0} & \textbf{80.0} & $2.14 \pm 0.36$ & $5.83 \pm 1.30$\\
\ET
& 50.0 & 60.0 & $1.86 \pm 0.26$ & $3.17 \pm 1.03$\\
\nom
& 30.0 & 30.0 & $1.46 \pm 0.19$ & $2.35 \pm 1.23$\\
\CPO
& 0.0 & 0.0 & $1.30 \pm 0.38$ & $2.37 \pm 0.73$\\
\Lag
& 20.0 & 20.0 & $0.74 \pm 0.43$ & $1.04 \pm 0.58$\\

\midrule
\multicolumn{5}{c}{\textbf{Robot defending}}\\
\midrule
Method & TOR ($\%$) $\uparrow$ & SR ($\%$) $\uparrow$ & HPS (m$/$s) & HPA (m$/$s$^2$)\\
\midrule
\StoC
& \textbf{70.0} & 90.0 & $2.40 \pm 0.38$ & $4.31 \pm 1.14$\\
\ET
& 60.0 & 80.0 & $2.07 \pm 0.16$ & $4.34 \pm 1.21$\\
\nom
& 40.0 & 50.0 & $1.22 \pm 0.15$ & $2.46 \pm 0.79$\\
\CPO
& 0.0 & \textbf{100.0} & $1.30 \pm 0.22$ & $2.15 \pm 0.70$\\
\Lag
& 10.0 & 60.0 & $1.10 \pm 0.21$ & $1.85 \pm 0.66$\\
\bottomrule
\end{tabular*}

\vspace{3pt}
\parbox{\columnwidth}{%
\scriptsize\raggedright
Each row corresponds to ten trials.
TDR/TOR: touchdown/timeout rate.
SR: safe rate.
HPS: human's peak speed.
HPA: human's peak acceleration.
}
\end{table}
Table~\ref{tab:hardware} summarizes task completion, safe rates, and opponent kinematics across the hardware trials.
We record the following numbers as measures of human competence:
\begin{itemize}
    \item \textbf{Human peak speed (HPS).} The human opponent's peak speed in m$/$s.
    \item \textbf{Human peak acceleration (HPA).} The human opponent's peak acceleration in m$/$s$^2$.
\end{itemize}
Across both roles, \StoC{} achieves the highest task completion ($70\%$) while maintaining high safe rates ($80\%$ as attacker and $90\%$ as defender). 
In comparison, \ET{} and \nom{} compromise safety under competitive pressure, with safe rates dropping to $30\%$--$60\%$ when attacking. 
Meanwhile, \CPO{} fails to reconcile the two objectives: it causes a safety violation on every attacking trial (zero touchdowns), and achieves zero defensive violations only by passively traversing the field, unable to contest the human attacker in close proximity. 
From the HPS and HPA values, \StoC{} has to respond to the most demanding human movements with the highest mean peak speed in both roles, the highest attacking acceleration ($5.83\,\mathrm{m/s^2}$), and defending acceleration comparable to that of \ET{}. Those human strategies were not encountered during training. These trials provide evidence that \StoC{} is more robust against out-of-distribution opponents and support \textbf{H2}.

\section{Conclusion}
\label{sec:conclusion}

\para{Limitations}
While our analysis assumes perfect safety filters, deriving formal guarantees for $\nashTol$-Nash equilibria under approximation errors stemming from imperfect filters and task policies remains an open research question.
We also see an exciting opportunity to integrate S2C with latent-space safety filters with visual feedback~\cite{nakamura2025generalizing} for competitive robot interactions in unstructured, open-world environments.

\para{Summary}
In this paper, we introduced S2C, a two-stage framework that decouples safety from competitive task-policy learning in multi-robot interactions. We proved that under perfect safety filtering, equilibrium policies of the filtered game remain non-exploitable within the safe policy class of the original safety-critical Markov game. We validated S2C on a quadruped touchdown game in both simulation and hardware tests. The results demonstrate that S2C achieves the highest task performance and the lowest exploitability across the evaluation pool while maintaining high safe rates.
Our work establishes a principled framework for learning safe and minimally exploitable robot policies for competitive interactions.

\balance
\printbibliography 

\end{document}